\documentclass[10pt,twocolumn,letterpaper]{article}

\usepackage{iccv}
\usepackage{times}
\usepackage{epsfig}
\usepackage{graphicx}
\usepackage{amsmath}
\usepackage{amssymb}
\usepackage{times}
\usepackage{epsfig}
\usepackage{graphicx}
\usepackage{amsmath}
\usepackage{amssymb}
\usepackage[]{algorithm2e}
\usepackage{makecell}
\usepackage{amsthm}
\usepackage{soul}
\usepackage{multirow}

\newtheorem{theorem}{Theorem}[section]

\newtheorem{prop}[theorem]{Proposition}
\newtheorem{corr}[theorem]{Corollary}

\theoremstyle{definition}
\newtheorem{definition}[theorem]{Definition}
\theoremstyle{definition}

\theoremstyle{remark}

\theoremstyle{remark}

\theoremstyle{remark}

\newcommand*{\QEDbs}{\hfill\ensuremath{\blacksquare}}%
%



\usepackage[breaklinks=true,bookmarks=false]{hyperref}

\iccvfinalcopy 


\begin{document}


\title{Truncating Wide Networks using Binary Tree Architectures}

\author{Yan Zhang$^1$, Mete Ozay$^1$, Shuohao Li$^{1,2}$ and Takayuki Okatani$^1$\\
$^1$Tohoku University, Sendai Japan\\
$^2$National University of Defense Technology, Changsha China\\
{\tt\small \{zhang,mozay,lishuohao,okatani\}@vision.is.tohoku.ac.jp}
}

\maketitle

\begin{abstract}
Recent study shows that a wide deep network can obtain  accuracy comparable to a deeper but narrower network. Compared to narrower and deeper networks, wide networks employ relatively less number of layers and have various important benefits, such that they have less running time on parallel computing devices, and they are less affected by gradient vanishing problems. 
However, the parameter size of a wide network can be very large due to use of large width of each layer in the network. 
In order to keep the benefits of wide networks meanwhile improve the parameter size and accuracy trade-off of wide networks, we propose a binary tree architecture to truncate architecture of wide networks by reducing the width of the networks. More precisely, in the proposed architecture, the width is continuously reduced from lower layers to higher layers in order to increase the expressive capacity of network with a less increase on parameter size. Also, to ease the gradient vanishing problem, features obtained at different layers are concatenated to form the output of our architecture.
By employing the proposed architecture on a baseline wide network, we can construct and train a new network with same depth but considerably less number of parameters. In our experimental analyses, we observe that the proposed architecture enables us to obtain better parameter size and accuracy trade-off compared to baseline networks using various benchmark image classification datasets. The results show that our model can decrease the classification error of baseline from $20.43\%$ to $19.22\%$ on Cifar-100 using only $28\%$ of parameters that baseline has. Code is available at {\color{red}  \url{https://github.com/ZhangVision/bitnet}}
\end{abstract}

\section{Introduction}

Recently, Deep Neural Networks (DNNs) have achieved impressive results for many image classification tasks~\cite{alexnet,vgg,NIN,googlenet,resnet,rethinking,resnetpreact,inceptionv4}.
Various architectures of DNNs have been proposed in order to improve classification accuracy. 
One approach used to improve accuracy of a DNN is to widen each layer while keeping the depth unchanged. For instance, it is empirically shown in~\cite{wrn} that a wide but relatively shallow DNN can obtain accuracy comparable to a narrow but relatively deep DNN on several classification tasks.
There are two crucial benefits of wide-shallow DNNs. First, it usually runs \textit{faster} than narrow-deep DNNs on parallel computing devices, e.g. GPUs, as illustrated in~\cite{wrn}. Also, a deep DNN with many layers may suffer from a \textit{gradient vanishing problem}. Reducing the depth can ease this problem as shown in~\cite{stochasticdepth}.
However, parameter size of DNNs may significantly increase with respect to improvement of accuracy by widening each layer. 


\begin{figure}[t]
\begin{center}
   \includegraphics[width=1\linewidth]{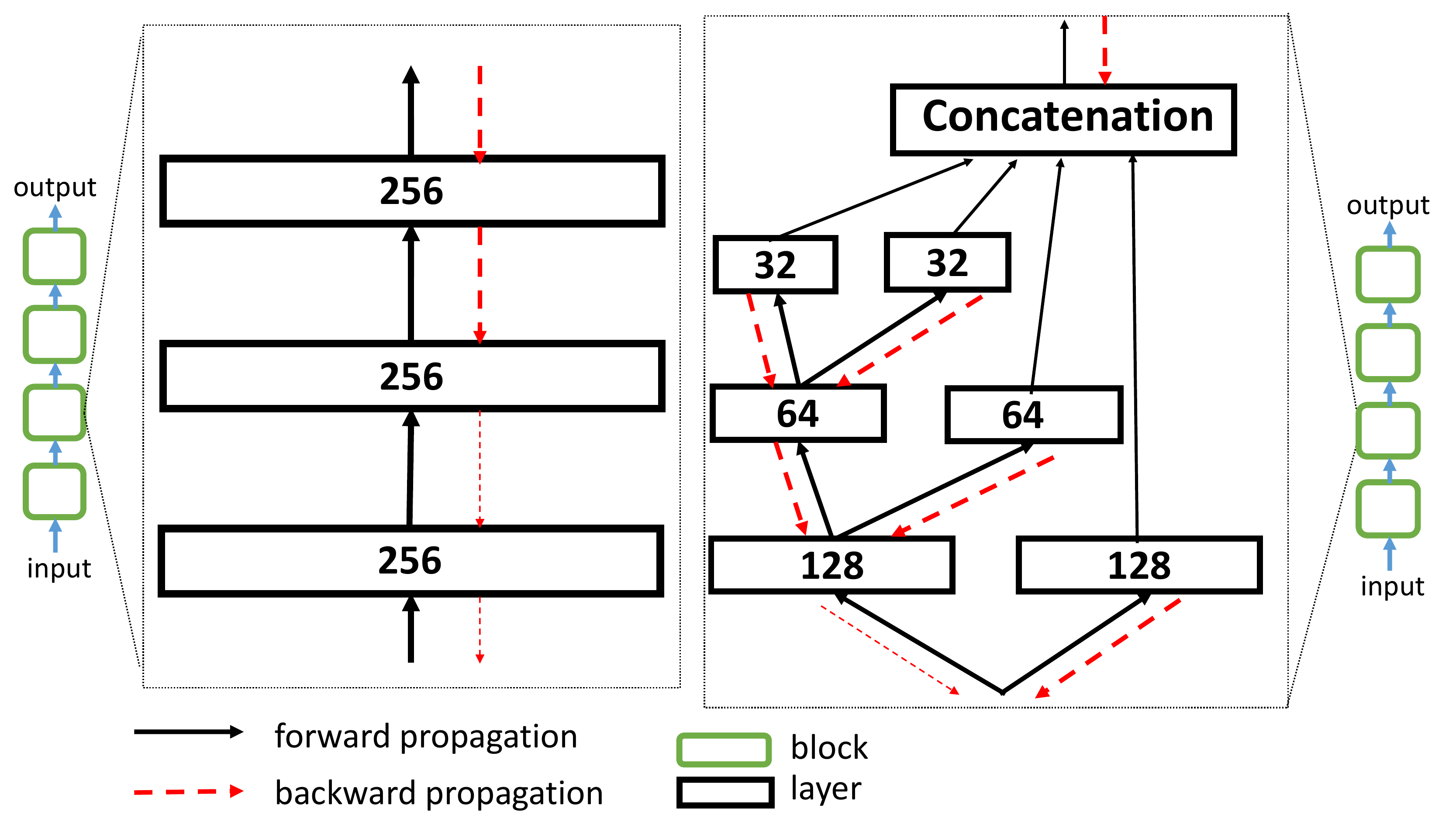}
\end{center}
   \caption{\textbf{Left}: A conventional architecture~\cite{resnet}. \textbf{Right}: Proposed binary tree architecture. Arrows indicate data flow. The numbers depicted in the rectangle denote the width of the layer. Both architectures have depth $3$ and output width $256$. Our architecture has considerably less number of parameters.}
\label{fig:bitnet}
\end{figure}


In this paper, we address a problem of improvement of the parameter size and accuracy trade-off of the aforementioned wide-shallow DNNs. Specifically, given a baseline wide-shallow DNN, we aim to construct and train a new DNN equipped with the following two desirable properties.
\begin{itemize}
    \item The new DNN has a depth not greater than the baseline wide-shallow DNN so that it can keep the aforementioned two benefits.
    
    \item The new DNN can achieve comparable accuracy using relatively less number of parameters compared to the baseline DNN, or can achieve better accuracy by using the same number of parameters as baseline DNN.
\end{itemize}

Toward this end, we propose a binary tree architecture for implementation of DNNs with a \textit{better} trade-off. An illustrative comparison of our proposed binary tree architecture and conventional architectures~\cite{resnet,wrn} is given in Figure~\ref{fig:bitnet}. In conventional architectures, layers having same width (number of channels) are sequentially stacked.
In the proposed binary tree architecture, the width of the $k^{th}$ layer is $\frac{D}{2^{k-1}}$, where $D$ is the width of the first layer ($k=1$) (input layer). Additionally, connections between layers of the proposed architecture are established as connections used in an asymmetric binary tree.
At each $k^{th}$ layer of a binary tree architecture, we have $C_k=\frac{D}{2^{k-1}}$ channels.
Then, $\frac{C_k}{2}$ of channels are connected to the channels of the $k+1^{st}$ layer. In addition, the remaining $\frac{C_k}{2}$ channels are directly concatenated to form the output of the architecture. Note that our binary tree architecture can be generalized to fully connected layers in which the width becomes the number of neurons used at the layer.

Our \textbf{motivation} for employment of the proposed binary tree architecture is twofold. First, we intend to increase the \textbf{expressive capacity of DNNs} with a relatively small increase of parameter size. In this paper, we use the definition of expressive capacity proposed in the previous work \cite{bengio1,bengio12}, where it is defined to be the maximal number of linear regions of (decision) functions computable by the given DNN. As shown in~\cite{bengio1,bengio12}, it reflects the complexity of class decision boundary computable by the DNN.
Their results state that the maximal number of linear regions
of a fully connected feed forward neural network endowed with ReLU~\cite{relu} activation functions grows exponentially with respect to the depth of the network, and polynomially with respect to the width of the network (i.e., the number of neurons used at each layer of the network). 
Following this theoretical result, one can increase the expressive capacity with a small increase of the parameter size by simply stacking more layers with \textit{small} width. This leads to the first characteristic structure of our binary tree architecture, which is obtained by continuous decrease of the width from input layer to higher layers by a factor of $2^{-1}$. 
With this specific structure, the expressive capacity grows with small increase of the parameter size. In our experiments, a binary tree used at convolutional layers of a DNN can increase classification accuracy with a small increase of  parameter size of the DNN.
%

The second motivation for employment of our binary tree architecture is to \textbf{ease a vanishing gradient problem observed in DNNs}. While training DNNs, the magnitude of gradient can be cumulatively reduced when it is propagated from higher layers to lower layers. Therefore, the more layers are used for propagation, the weaker gradient will be obtained at the lower layers, which makes it difficult to train a DNN with many layers~\cite{gradvanish2,gradvanish1}. 
Consequently, the gradient vanishing problem suggests reduction of the depth of a DNN, while increasing the depth may lead to efficient improvement of the expressive capacity as mentioned above. Thus, we need to trade-off between easing the gradient vanishing problem and increasing the expressive capacity, which  motivates our second characteristic structure of our proposed binary tree architecture that is obtained by concatenation of features obtained at different layers. With this specific structure, gradients can propagate through short path to lower layers. An illustration is given in Figure~\ref{fig:bitnet}, where flow of gradient propagation during backpropagation is depicted by red dash line.
For a better illustration of  vanishing gradients, we use thicker red line to show stronger gradient. Our empirical analyses in Section \ref{gv} show that concatenation of features at lower layers can ease the gradient vanishing problem.

Our contributions are summarized as follows. 

    (1) We propose a binary tree architecture to improve the trade-off between parameter size and classification accuracy of given baseline wide-shallow DNNs. Meanwhile, the depth of baseline wide-shallow DNNs is not increased to keep the benefit of running speed on parallel computing devices.
  
    (2) Our experimental results show that, on the Cifar datasets, one can construct a DNN using the proposed binary tree architecture to achieve better accuracy but using considerably less number of parameters. On the Cifar-100 dataset, our models can outperform corresponding baselines by using only approximately $50\%$ of baseline's parameter size. One of our models decreases the classification error of baseline from $20.43\%$ to $19.22\%$ by using only $28\%$ of parameters that baseline has.
On ILSVRC12 task, we construct and train two DNNs using the proposed binary tree architecture which also provide better parameter size and classification accuracy trade-off than baseline models. 

(3) We also provide a theoretical analysis of the expressive capacity of DNNs endowed with our proposed binary tree architecture as a function of its depth and width. The theoretical results indicate that expressive capacity of DNNs endowed with our proposed binary tree architecture can grow with small increase of the parameter size.

The rest of this paper is organized as follows. The second section provides the related work. In Section 3, we introduce our binary tree architecture. Experimental results and analyses are given in Section 4. Section 5 concludes the paper.

\section{Related Work}

 

Recently,
various architectures of convolutional neural networks (CNNs) have been proposed~\cite{alexnet,NIN,googlenet,vgg,rethinking,inceptionv4}. 
In~\cite{forest}, connections between random forest~\cite{rf} and CNNs are investigated. Inspired by random forest, they embedded routing functions to CNNs and obtain Conditional CNNs. As shown in their experiments, Conditional CNNs with highly branched tree architectures can improve the accuracy-efficiency trade-off. 
Conditional CNNs can be considered as symmetric full tree architectures. On the other hand, we use an asymmetric tree architecture and concatenate features from different layers.  
Another related work, a \textit{fractal architecture} used by FractalNet was proposed in~\cite{fractalnet}.
Fractal architecture can also be considered as a tree architecture.
Fractal architecture is different from ours in several aspects. First, the output of a fractal architecture is element-wise mean of features obtained at different layers. Also, all convolutional layers used in fractal architecture use the same width, which may result in a large parameter size if the width is large.

As shown in~\cite{speed_up,Efficient}, the parameter size of a trained CNN can be reduced by constructing a new CNN with less redundancy of weights. However, these methods may cause a drop on accuracy after a compression of model. With our architecture, we can boost the accuracy with less number of parameters.


In~\cite{resnet}, a residual architecture is proposed to construct CNNs (ResNets). ResNets enable us to train considerably deeper CNNs. The deepest ResNets employed for classification using the ILSVRC12~\cite{ILSVRC15} and Cifar datasets have 200 and 1000 layers, respectively~\cite{resnet,resnetpreact}, and achieved impressive performance. 
However, a deep ResNet with many layers may suffer from two problems. First, the running speed on parallel computing devices may be slow compared to the speed of shallower ones. Thus, it takes more time to train a very deep ResNet. Also, the gradient vanishing problem~\cite{gradvanish2,gradvanish1} may also be observed on a very deep ResNet with many layers. 
To address these problems, a novel training procedure called stochastic depth is introduced in~\cite{stochasticdepth}. It enables one to train shallow ResNets during training and use deep ResNets for testing. Shallow ResNets can ease the gradient vanishing problem during training, and reduce the training time. Their experimental results show that their proposed training procedure can improve the test accuracy of baseline ResNets with constant depth. This indicates easing the gradient vanishing problem is crucial to train a very deep ResNet. However, during the test time, the depth of ResNets is the same as the depth of baseline which makes inference speed slower. 

Intuitively, a simpler way to avoid gradient vanishing problem and accelerate the running speed during training and inference is to use a shallow network. 
In~\cite{wrn}, Zagoruyko and Komadakis use a shallow ResNet, and increase the width to make the expressive capacity comparable with narrow-deep ones. Interestingly, they observe that wide-shallow CNNs can outperform its deeper-narrower peer CNNs which have the same parameter size on the Cifar-10/100 classification datasets. On the ILSVRC12 classification task, wide ResNets can also obtain comparable accuracy with smaller depth. Their results draw our attention to consider the wide-shallow DNNs.
In their method, the width of each layer is symmetrically increased by the same factor. This will significantly increase the parameter size. Therefore, in this work, the proposed binary architecture truncate architecture of wide networks by reducing  width of the networks considering their parameter size and accuracy trade-off. Our motivation for employment of binary tree architecture also considers gradient vanishing problem, which has been shown to be crucial for training DNNs in~\cite{stochasticdepth}. 

\section{Binary Tree Architecture}




In this section, we give an overview of conventional blocks used in previous works~\cite{resnet,wrn}, and introduce our proposed binary tree architecture. Then, we theoretically analyze expressive capacity and parameter size of our proposed binary tree architecture considering its depth and width.

\subsection{Conventional Blocks} \label{cs}
In some CNNs~\cite{vgg,resnet,resnetpreact,wrn}, a block is constructed by a stack of $K$ convolutional layers. At each $k^{th}$ layer, $k=1,2,\dots, K$, we compute
\begin{equation}
\begin{split}
 \mathbf{X}_k= f_k(\mathbf{X}_{k-1};\mathcal{W}_k)
\end{split},
\label{eq:mlp}
\end{equation}
where ${\mathbf{X}_{0}:=\mathbf{X} \in \mathbb{R}^{w\times h\times c}}$. $\mathbf{X}$ is an input  tensor of features given to the block, $\mathbf{X}_k\in \mathbb{R}^{\frac{w}{s}\times \frac{h}{s} \times D}$ is the tensor of features obtained at the output of $k^{th}$ layer, $c$ is the number of channels, $w$ is the $width$ and $h$ is the $height$ of a feature map. We assume that the number of convolutional filters used at each layer is $D$, and down-sampling is performed at the first convolutional layer by stride $s$. $f_k(\mathbf{X}_k;\mathcal{W}_k)$ is computed by a composition of a convolution operation, batch normalization~\cite{bn} and nonlinear function $\sigma$, where $ \mathcal{W}_k$ denotes a set of trainable parameters used at the $k^{th}$ layer, e.g. the filter weights. The output feature tensor of the block is obtained at the last layer ${\mathbf{Y}:=\mathbf{X}_K \in \mathbb{R}^{\frac{w}{s}\times \frac{h}{s} \times D}}$. 
 In \eqref{eq:mlp}, we omit shortcut connection and element-wise addition used in~\cite{resnet} to simplify the notation. We refer to the block defined in \eqref{eq:mlp} as a \textit{conventional block} (ConvenBlock) with width $D$ and depth $K$ in the following sections. An illustration of ConvenBlock with width $256$ and depth $3$ is given in Figure~\ref{fig:bitnet} (left).



\subsection{Binary Tree Blocks} 



We define a \textit{binary tree block} (BitBlock) by a binary tree $\mathcal{T}=(\mathcal{V}, \mathcal{E})$, where $\mathcal{V}$ is the set of nodes residing in the block, and $\mathcal{E}$ is the set of edges that connect the nodes. The architecture of a BitBlock is determined by its width $D$ and depth $K$, where $D$ is the channel number, i.e. number of feature tensors provided by the BitBlock at the output, and $K$ is the depth of the binary tree $\mathcal{T}$, as follows. 

\begin{itemize}
    \item At the root of a BitBlock $\mathcal{T}$ denoted by $v_0 \in \mathcal{V}$, we have a feature tensor denoted by ${\mathbf{X}_{0,l}:=\mathbf{X} \in \mathbb{R}^{w\times h\times c}}$, where $\mathbf{X}$ is given as an input to the block.

 \item At each $k^{th}$ layer (level) of a BitBlock, we apply two mapping functions $f_{k,l}$ and $f_{k,r}$ to the input feature tensor $\mathbf{X}_{k-1,l}$. Then, we compute a feature tensor $\mathbf{X}_{k,l}$ having $ \frac{D}{2^k}$ channels on a left child node of $v_{k-1}$, and a feature tensor $\mathbf{X}_{k,r}$ having $ \frac{D}{2^k}$ channels on a right child node of $v_{k-1}$ at the $k^{th}$ layer of $\mathcal{T}$. The feature tensor $\mathbf{X}_{k,l}$ is further fed to the next $k+1^{st}$ layer. More precisely, at each $k^{th}$ level, we compute
\begin{equation}
\begin{split}
 \mathbf{X}_{k,l}=f_{k,l}(\mathbf{X}_{k-1,l};\mathcal{W}_{k,l}), \\
 \mathbf{X}_{k,r}=f_{k,r}(\mathbf{X}_{k-1,l};\mathcal{W}_{k,r}).
\end{split}
\label{eq:tree}
\end{equation}

\item Finally, feature tensors computed at all right child nodes at each $k^{th}$ level of $\mathcal{T}$, and the feature tensors computed at the left child node at the last $K^{th}$ level of $\mathcal{T}$ are concatenated across channels to construct the output feature tensor of the BitBlock by
\begin{equation}
\begin{split}
 \mathbf{Y}={\rm concat}(\mathbf{X}_{1,l} \bullet \mathbf{X}_{2,l} \bullet \dots \bullet \mathbf{X}_{K,l} \bullet \mathbf{X}_{K,r}) 
\end{split}.
\label{eq:conc}
\end{equation}
The size of concatenated tensor $\mathbf{Y}$ is $(\frac{w}{s}\times \frac{h}{s} \times D)$. In the case of down-sampling, we apply a stride $s$ at the first layer as utilized in ConvenBlocks.

\end{itemize}

We note that $D$ is divided by $2^K$. Moreover, if ${K=1}$, then the BitBlock $\mathcal{T}$ reduces to a single convolution layer. Our BitBlocks can also be applied to fully connected layers using a feature tensor $\mathbf{X} \in \mathbb{R} ^{1\times 1 \times C}$. We denote it by fully connected BitBlock. An illustration of proposed BitBlock with width $256$ and depth $3$ is given in Figure~\ref{fig:bitnet} (right). We can construct a DNN by stacking multiple BitBlocks.
A DNN endowed with BitBlocks is called as a BitNet in this paper.

\subsection{A Theoretical Analyses of Expressive Capacity of Binary Tree Blocks}\label{theory}

In this section, we theoretically analyze expressive capacity and parameter size of the proposed binary tree block with respect to its depth and width when it is used at fully connected layers. We use the definition of expressive capacity proposed in the previous work \cite{bengio1,bengio12}. Precisely, expressive capacity of a DNN is defined by the maximal number of linear regions of (decision) functions computable by the given DNN. 
The formal definition of linear regions of a function is given as follows.

\begin{definition}[\textbf{Linear Region}]
Given a function $f(\cdot)$ with $n$-dimensional input space $\mathbb{R}^n$, a linear region $\mathbb{R}_i^n$ of $f(\cdot)$ is a subspace of its input space, such that $f(\cdot)$ computes a linear mapping on that linear region, i.e. $\forall \mathbf{x}\in \mathbb{R}_i^n$,
$f(\mathbf{x}):=\mathbf{w}_i\mathbf{x}+\mathbf{b}_i$ and $\forall \mathbf{x}\notin \mathbb{R}_i^n$,
$f(\mathbf{x}):\neq\mathbf{w}_i\mathbf{x}+\mathbf{b}_i$.


\end{definition}

Consider a decision function $f(\cdot)$ computed by a DNN, then the more number of linear regions $f(\cdot)$ has, the more complex input-output mapping the DNN can compute. In other words, the DNN can compute more complex decision boundary and solve more complex classification tasks.
A multilayer perceptron network implementing a linear activation function computes a linear mapping between input and output. Thus, it only has $1$ linear region, i.e. the whole input space.



We provide the following two theoretical results regarding i) computation of the parameter size, and ii) computation of the maximal number of linear regions of functions computable by a fully connected conventional network and a BitNet. Proofs of the theorems are given in the supplemental material. We first provide the results for conventional networks that can be easily derived using the results given in \cite{bengio12}.




\begin{corr}\label{corr3}
Suppose that we are given a fully connected neural network stacked by $L$ fully connected ConvenBlocks each having width $D$ and depth $K$. The parameter size of the given network is $\mathcal{O}\big(LKD^2\big)$. The maximal number of linear regions of functions that can be computed by the given network in an $n$-dimensional ($D\geq n$) input space is lower bounded by $\mathcal{O}\big((\frac{D}{n})^{nKL} \big)$.
\QEDbs
\end{corr}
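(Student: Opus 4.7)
The plan is to reduce the statement to the existing lower bound of Mont\'ufar--Pascanu--Cho--Bengio (the \cite{bengio12} result cited just above), since a stack of $L$ fully connected ConvenBlocks each of depth $K$ and width $D$ is, by the definition in Section~\ref{cs}, nothing more than a single fully connected ReLU network of depth $KL$ and uniform width $D$. Once this identification is made, both parts of the corollary are immediate.

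For the parameter count I would just enumerate layers. Each of the $KL$ hidden layers is a fully connected map from $\mathbb{R}^D$ to $\mathbb{R}^D$, contributing a weight matrix of size $D\times D$ plus an $O(D)$ bias vector, hence $D^2+D = O(D^2)$ parameters; batch normalization and the elementwise nonlinearity $\sigma$ add only $O(D)$ per layer and do not change the order. Summing over the $KL$ layers and absorbing a possible $O(D)$ input/output projection gives the stated $\mathcal{O}(LKD^2)$.

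For the linear-region bound I would quote the theorem from \cite{bengio12}: a fully connected ReLU network of depth $M$ and uniform width $D$, acting on an $n$-dimensional input with $D\geq n$, computes a piecewise linear function with at least $\Omega\!\left(\left(\tfrac{D}{n}\right)^{n(M-1)} D^{n}\right)$ linear regions. Substituting $M=KL$ and noting that $D^{n} = n^{n}\left(\tfrac{D}{n}\right)^{n}$ is absorbed into the $\mathcal{O}(\cdot)$ asymptotic, the bound becomes $\Omega\!\left(\left(\tfrac{D}{n}\right)^{nKL}\right)$, which is exactly the claim. The only thing to verify is that Definition~1 (the paper's own notion of linear region) agrees with the one used in \cite{bengio12}; the statement ``$f$ is affine on $\mathbb{R}^n_i$ and fails to be the same affine map outside'' is precisely the maximal-connected-affine-piece definition used there, so the bound applies verbatim.

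The main (only) obstacle is this bookkeeping identification between a ConvenBlock stack and a flat deep ReLU net, together with ensuring that the omitted bias/BN/activation terms in \eqref{eq:mlp} do not affect either the parameter order or the Mont\'ufar et al.\ construction (the latter only uses ReLU nonlinearity and fully connected weights, both of which are present). No new combinatorial argument is needed; the proof is essentially a one-line application of \cite{bengio12} after the reduction.
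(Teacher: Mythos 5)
Your proposal is correct and follows essentially the same route as the paper's own proof: flatten the $L$ blocks of depth $K$ into a single width-$D$ network of depth $KL$, count $\mathcal{O}(D^2)$ parameters per layer, and invoke the Mont\'ufar et al.\ lower bound (Corollary 5 of the cited work) with depth $KL$, then loosen $\big(\tfrac{D}{n}\big)^{n(KL-1)}D^n$ to $\big(\tfrac{D}{n}\big)^{nKL}$ by absorbing the constant $n^n$. No substantive difference from the paper's argument.
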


The expressive capacity of our proposed BitNet is given as follows.
\begin{prop}\label{prop3}
Suppose that we are given a BitNet stacked by $L$ fully connected BitBlocks each having width $D$ and depth $K$. The parameter size of the given BitNet is $\mathcal{O}\Big( \frac{4}{3}L\big(1-\frac{1}{4^K}\big)D^2\Big)$. The maximal number of linear regions of functions that can be computed by the given BitNet in an $n$-dimensional ($D\geq 2^Kn$) input space is lower bounded by $\mathcal{O}\Big( \big(\frac{D}{2^Kn}\big)^{nKL} \Big)$.
\QEDbs
\end{prop}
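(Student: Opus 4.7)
The plan is to prove the two bounds separately. For the parameter count, I would enumerate weights level by level within a single BitBlock. At level $k \in \{1,\dots,K\}$, the input tensor $\mathbf{X}_{k-1,l}$ has $D/2^{k-1}$ channels (using the convention $D/2^0 = D$ at the root, and assuming as in a stacked BitNet that the block input matches the block width $D$). Each of the two maps $f_{k,l}$ and $f_{k,r}$ produces $D/2^k$ output channels, contributing $2 \cdot (D/2^{k-1}) \cdot (D/2^k) = D^2/4^{k-1}$ parameters. Summing the geometric series $\sum_{k=1}^{K} D^2/4^{k-1} = \tfrac{4}{3}(1 - 4^{-K}) D^2$ gives the per-block parameter count, and multiplying by $L$ yields the stated $\mathcal{O}\bigl(\tfrac{4}{3}L(1-4^{-K})D^2\bigr)$ overall bound.

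For the lower bound on the number of linear regions, I would invoke the result of \cite{bengio12}: a feed-forward ReLU network with layer widths $n_1,\dots,n_M$ and input dimension $n$ (each $n_i\geq n$) has maximal region count bounded below by $\Omega\bigl(\prod_i (n_i/n)^n\bigr)$. The strategy is to exhibit a weight configuration of the BitNet that reduces it to a standard deep ReLU chain, then apply this bound. Concretely, I would zero out all right-child mappings $f_{k,r}$ within every block, and set each block's level-$1$ mapping $f_{1,l}$ (which sees the full concatenated output of the previous block) to read only the left-leaf component $\mathbf{X}_{K,l}$ of that output, ignoring the zeroed right-child channels. Under this configuration, the effective computation is a feed-forward ReLU network of $KL$ layers, whose per-block widths are $D/2, D/4, \dots, D/2^K$ and whose narrowest width is $D/2^K$. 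Since silenced neurons cannot decrease the maximal region count (which is a supremum over weight choices), this construction furnishes a valid lower bound. Bounding each width uniformly from below by $D/2^K$ and applying the folding bound gives $\Omega\bigl(\prod_{i=1}^{KL} (D/(2^K n))^n\bigr) = \Omega\bigl((D/(2^K n))^{nKL}\bigr)$, which is the claimed estimate. The hypothesis $D\geq 2^K n$ guarantees $n_i \geq n$ at every level so that the cited bound is applicable.

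The principal obstacle is the rigorous justification that restricting to the left-child chain, rather than exploiting the right-child branches and the concatenation operation, genuinely yields a lower bound on the BitNet's maximal region count. The argument above sidesteps this by constructing an explicit weight assignment realizing the target count, invoking the fact that the maximum is attained over all admissible weights; the proof therefore reduces to Corollary~\ref{corr3}-type reasoning applied to the embedded chain of $KL$ layers. A secondary verification is that the looseness of using the uniform width $D/2^K$ (rather than the true varying widths $D/2^k$) is what produces the stated form $(D/(2^K n))^{nKL}$ in Proposition~\ref{prop3}; a sharper but less uniform bound $\prod_{k=1}^{K}(D/(2^k n))^n$ per block is also immediate but is not the form claimed.
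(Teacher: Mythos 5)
Your proposal is correct and follows essentially the same route as the paper: the parameter count comes from the same geometric series $\sum_{k=1}^{K}(D/2^{k-1})^2=\tfrac{4}{3}(1-4^{-K})D^2$, and the region bound applies the Montufar et al.\ product bound to the left-child chain with widths $D/2^k$ and then loosens each factor to $D/2^K$, exactly as in the paper's chain of inequalities $\prod_{k=1}^{K}(D/(2^kn))^n>(D/(2^Kn))^{nK}$. Your explicit weight-zeroing construction justifying why the tree may be reduced to a chain for a lower bound is a small added rigor the paper leaves implicit, but it is not a different argument.
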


As we observe from these results, when $D$ and $L$ are fixed, as $K$ increases, the expressive capacity of BitNets can grow with a small increase of the parameter size. Although the expressive capacity of a fully connected conventional network can grow faster as $K$ increases, its parameter size also grows faster than that of a BitNet.
Although these theoretical results are obtained for fully connected layers, our experimental observations reflect that the results can be applied also for the convolutional layers. 
In our experiments, a binary tree used at convolutional layers of a DNN can increase classification accuracy with a small increase of  parameter size of the DNN.

\section{Experimental Results}

We empirically analyze the proposed binary tree architecture using various baseline ResNets and several benchmark image classification datasets. We also analyze the gradient vanishing problem of the proposed binary tree architecture. In the implementation of BitNets, given a baseline ResNet, we can construct a BitNet of the same depth and the same block width with considerably fewer parameters compared to the baseline (ResNet). We can also increase the block width but keep the depth of baseline model by the proposed binary tree architecture and obtain a BitNet with a similar number of parameters. The configuration details of BitNets are given for each classification task.

\subsection{Cifar-10 and Cifar-100}

Cifar-10 and Cifar-100~\cite{cifar} are image datasets consisting of 50,000 training images and 10,000 test images. The spatial size of each image is $32\times 32$. Cifar-10 and Cifar-100 consist of 10 and 100 categories, respectively. The architectures of BitNets and ResNets used to perform analyses on the Cifar datasets are given in Table~\ref{table:cifarnet}. As we can see from the table, the depth of a net is determined by the number of blocks in each group, i.e. $n$, and the depth of each block, i.e. $k$. The width of each block is determined by $d$.

We first compare performance of our proposed BitNets and that of Wide ResNets~\cite{wrn} by constructing nets with same depth and block width. Specifically, given a Wide ResNet with fixed value of $d$, $k$, and $n$, we use the same block width $d$ for BitNet. Then, we set values of $k$ and $n$ for BitNet, such that the total depth of BitNet equals to the total depth of the given Wide ResNet. We can use different value combinations of $k$ and $n$ for BitNet as long as $k\times n$ value for BitNet equals $k\times n$ value for the given Wide ResNet.
For example, given a $38$-layer Wide ResNet with $(d=4,k=2,n=6)$, we can construct a BitNet with $(d=4,k=3,n=4)$ or a BitNet with $(d=4,k=4,n=3)$, both implying $38$ layers.

For fare comparison with Wide ResNets, we use the same training and testing setting as employed in~\cite{wrn}.
Specifically, 
for data augmentation, 
we use a common method used in previous works~\cite{Maxout,NIN,resnet,wrn}. More precisely, 4 pixels with zero values are padded on each side of the original image to make a $40\times 40$ image, from which a $32\times 32$ patch is randomly cropped and randomly flipped horizontally. 
For testing, the original $32\times 32$ image is used. Batch size is 128 that is split on two GPUs. The initial learning rate is $0.1$, and is reduced by $0.2$ on the $60^{th}$, $120^{th}$ and $160^{th}$ epoch. The training is finished at the $200^{th}$ epoch.

\begin{table}[t]
\renewcommand{\arraystretch}{1.2}
\begin{center}
\begin{tabular}{|l|c|c|c|c|c|c|c|c|c|}
\hline
 Group Name &Configuration &Output Size\\
 \hline
 \hline
conv1       &conv, 16 channels& $32\times 32$ \\
\hline
conv2 &   block$(d\times 16,k)\times n$ &  $32\times 32$ \\
\hline
conv3($\downarrow$) & block$(d\times 32, k )\times n$ & $16\times 16$ \\
\hline
conv4($\downarrow$) &block$(d\times 64, k)\times n$ & $8\times 8$ \\
\hline

gap&  global average pooling &$1\times 1$ \\
\hline
fc&  $10$ or $100$-way softmax & \\
\hline
\end{tabular}
\end{center}
\caption{The CNN architecture employed for classification using the Cifar-10 and Cifar-100. BitNets use BitBlock as block type, and ResNets use ConvenBlock with residual connection. $k$ denotes the depth of each block ($k=2$ for all ResNets used in this paper). $d$ determines the width of each block. $n$ denotes a stack of $n$ blocks. All convolutional layers use filters of size $3\times 3$. Batch Normalization is used at every convolutional layer before ReLU. Down-sampling is performed by applying stride $2$ at the first convolutional layer of the first block in Group conv3 and conv4.}
\label{table:cifarnet}
\end{table}

Table~\ref{table:cifarerr} provides the comparative results for several Wide ResNets and BitNets, which are designed with the same width and depth. As we can see from the results, using the same depth and width, BitNet has considerably less number of parameters and FLOPs. Moreover, BitNets can outperform Wide ResNets using considerably less number of parameters. 
We compare BitNets with four baseline Wide ResNets. In the analyses, we obtained the following results:

\begin{table*}[t]
\begin{center}
\small
\renewcommand{\arraystretch}{1.2}
\begin{tabular}{ l|c c c|c c}
\hline
 Model &Depth&Param&FLOP & Cifar10& Cifar100  \\
  \hline
 NIN~\cite{NIN} &-& -&-&$8.81$  &$35.67$ \\
 \hline
 ELU~\cite{elu} &-& -&-&$6.55$  &$24.28$ \\
 \hline
 DSN~\cite{DSN} &-& -&-& $7.97$ &$34.57$ \\
 \hline
 AllCNN~\cite{Allconv} &-& -&-& $7.25$ &$33.71$ \\
 \hline
  ResNet~\cite{resnet} &$1202$& $10.2$M&-&$4.91$  &$-$ \\
 \hline
preact-ResNet~\cite{resnetpreact}    &$1001$& $10.2$M&-&$4.62$&$22.71$ \\
\hline
Stochastic Depth ResNet~\cite{stochasticdepth}    &$110$& $1.7$M&-&$ 5.25 $&$24.98$ \\
\hline
FractalNet~\cite{fractalnet}    &$40$& $22.9$M&-&$ 5.24 $&$22.49$ \\
\hline
 Wide ResNet (d=4,k=2,n=6)~\cite{wrn} &$38$& $8.9$M&$1.34\times 10^9$&$4.97$  &$22.89$ \\
 
  BitNet (d=4,k=3,n=4) &$38$& $3.7$M&$0.53\times 10^9$&$4.82$  &$\underline{22.19}$ \\
 
  BitNet (d=4,k=4,n=3) &$38$& $2.7$M&$0.39\times 10^9$&$\underline{4.65}$  &$22.60$ \\

  BitNet (d=4,k=2,n=6) &$38$& $5.4$M&$0.78\times 10^9$&$5.31$  &$23.22$ \\

  BitNet (d=4,k=6,n=2) &$38$& $1.7$M&$0.24\times 10^9$&$4.77$  &$23.87$ \\
  \hline
  Wide ResNet (d=10,k=2,n=2)~\cite{wrn} &$14$& $17.1$M&$2.64\times 10^9$&$4.56$  &$21.59$ \\
 
   BitNet (d=10,k=2,n=2) &$14$& $9.6$M&$1.32\times 10^9$& $\underline{4.17}$  &$\underline{20.48}$ \\
 
   BitNet (d=10,k=4,n=1) &$14$& $3.9$M&$0.49\times 10^9$&$4.97$  &$23.88$ \\
  \hline
  Wide ResNet (d=10,k=2,n=3)~\cite{wrn} &$20$& $26.8$M&$4.06\times 10^9$&$4.44$  &$20.75$ \\
 
   BitNet (d=10,k=2,n=3) &$20$& $15.6$M&$2.21\times 10^9$&$\underline{\mathbf{3.78}}$  &$\underline{19.29}$ \\
 
    BitNet (d=10,k=3,n=2) &$20$& $10.2$M&$1.41\times 10^9$&$3.81$  &$19.37$ \\
  \hline
  Wide ResNet (d=12,k=2,n=4)~\cite{wrn} &$26$& $52.5$M&$7.87\times 10^9$&$4.33$  &$20.43$ \\
 
   BitNet (d=12,k=2,n=4) &$26$& $31.2$M&$4.45\times 10^9$&$\underline{4.07}$  &$\underline{\mathbf{19.06}}$\\
  
   BitNet (d=12,k=4,n=2) &$26$& $14.9$M&$2.06\times 10^9$&$4.11$  &$19.22$ \\  
  \hline

\end{tabular}
\end{center}
\caption{Classification error (\%) of CNNs for the Cifar-10/100 datasets. Using the same depth and block width, our BitNets can outperform Wide ResNets with considerably less number of parameter size. 
Underlined numbers indicate the best performance among models having the same depth and same block width. Bold numbers denote the best performance obtained for all models.
Definition of $d$, $k$ and $n$ are given in Table~\ref{table:cifarnet}. All of Wide ResNets and our BitNets are trained using data augmentation and without using dropout.}
\label{table:cifarerr}
\end{table*}

(1) A Wide ResNet having $(d=4,k=2,n=6)$ is the deepest and narrowest architecture among four baseline architectures. 
 The BitNet $(d=4,k=4,n=3)$ and the BitNet $(d=4,k=3,n=4)$ can obtain comparable performance with this baseline and the parameter size is only $30\%$ and $41\%$ of that of baseline, respectively. The BitNet $(d=4,k=2,n=6)$ is constructed by using more number of blocks but reduce the depth of each block. As shown in Section \ref{gv}, this BitNet suffers from a gradient vanishing problem during the training due to use of small $k$ and a large $n$.
 As a result, the performance slightly degrades. BitNet $(d=4,k=6,n=2)$ has least parameter size, i.e. only $19\%$ of baseline's parameter size. However, it also performs worst.
 
 (2) Compared with first baseline ResNet $(d=4,k=2,n=6)$, the baseline ResNet $(d=10,k=2,n=2)$ is wider and shallower.
 The BitNet $(d=10,k=2,n=2)$ outperforms it by $1\%$ in the Cifar100 task by using approximately $56\%$ of baseline ResNet's parameter size. Another configuration of BitNet $(d=10,k=4,n=1)$ uses only one BitBlock at each group, resulting in only $23\%$ of baseline's parameter size. However, the performance is also degraded by more than $1\%$ using the Cifar-100 dataset. For the Cifar-10 dataset, both BitNets obtain similar performance compared to the baseline.
 
(3) For the Wide ResNet $(d=10,k=2,n=3)$, both BitNets obtain more than $1\%$ performance boost using the Cifar-100 dataset. For the Cifar-10 dataset, the performance boost is more than $0.5\%$. Notably, the parameter size of the BitNet $(d=10,k=3,n=2)$
is only $38\%$ of the parameter size of the baseline.

(4) The Wide ResNet $(d=12,k=2,n=4)$ has the largest parameter size among four baseline models. Our two BitNets both outperform the baseline by more than $1\%$ accuracy. Note that the parameter size of the BitNet $(d=12,k=4,n=2)$ is only $28\%$ of that of the baseline.

We emphasize that the performance of baseline Wide ResNets are already close to the state-of-the-art. Thus more than $1\%$ boost of the accuracy is obtained. To summarize, most of our BitNets can achieve better or approximately equal accuracy using less number of parameters, which indicates that our binary tree architecture can improve the parameter size and accuracy trade-off of baseline Wide ResNets. There are two BitNets whose accuracy are roughly $1\%$ lower than that of their baselines. This is possibly because they use too less number of BitBlocks causing insufficient expressive capacity. The rest of BitNets obtain sufficient expressive capacity with a relatively less number of parameter size.
Compared with other previous models such as Stochastic Depth ResNet~\cite{stochasticdepth} and FractalNet~\cite{fractalnet}, our BitNets can outperform them using less number of parameter size.

\subsection{ILSVRC12}

To evaluate the proposed architecture on a large scale image classification task, we also use the training and validation dataset of ILSVRC12~\cite{ILSVRC15}, which consists of 1.3M training images and 50,000 validation images belonging to 1000 categories. 
During training, data augmentation and image preprocessing methods are used as follows. The image is cropped by scale and aspect ratio augmentation method~\cite{googlenet}, and then resized to $224\times 224$. A random horizontal flip is also applied. The input images are mean subtracted and variance normalized on each RGB channel. The color distortion methods proposed in~\cite{alexnet} and~\cite{colordist} are both used. For validation, the image is resized such that its shorter side is 256, and a center crop of $224\times 224$ are used to test. Batch size is 256  split on 8 GPUs. The initial learning rate is 0.1 and is reduced by $10^{-1}$ at each $30$ epoch. The training is finished at the $90^{th}$ epoch.
Following~\cite{resnet}, stochastic gradient descent (SGD) with momentum 0.9 is used as our optimizer and the weight decay is set as 0.0001. Batch Normalization is used in every convolutional layer before ReLU.
We didn't use dropout~\cite{dout} in any BitNet.

In this task, we construct two BitNets (BitNet-26 and BitNet-34) in order to compare their performance with that of ResNet-34~\cite{resnet}. The details of models are given in Table~\ref{table:bitnets}.
The classification results are given in Table~\ref{table:imgacc}.
The results show that our BitNets have better parameter size and accuracy trade-off than ResNets. Specifically, BitNet-34
outperforms ResNet-34 B by $1\%$ using same parameter size, while their depth is the same.
Another smaller BitNet-26 obtains $1\%$ higher error compared to ResNet-34 B. However it is shallower and the parameter size is approximately $50\%$ of ResNet-34 B. 
Compared to model FractalNet-34, BitNet-34 also outperforms it. BitNet-26 outperforms ResNet-18 B by approximately $2\%$ accuracy using the same number of parameters. Increasing the wide of ResNet-18 by methods in~\cite{wrn} can improve the accuracy. However the cost for the improvement is also huge. BitNet-26 and ResNet-18 B width $\times2$ have comparable accuracy, but the parameter size of our BitNet is almost two times smaller than that of ResNet. Similarly, the parameter size of BitNet-34 is only $50\%$ of ResNet-18 width $\times3$.

\begin{table}[t]
\small
\begin{center}
\renewcommand{\arraystretch}{1.2}
\begin{tabular}{|l|c|c|c|}
\hline
Group &BitNet-26&BitNet-34&Output Size\\
\hline
conv1 &\multicolumn{2}{c|}{conv $7\times7$, 64 channels, stride 2}& $112\times 112$   \\
\hline
 \multirow{2}{*}{conv2}&\multicolumn{2}{c|}{max pooling $3\times3$, stride 2}&\multirow{2}{*}{$56\times 56$} \\
 \cline{2-3}
 &  b$(128,3)\times 2$ &b$(256,4)\times 2$& \\

\hline
 \multirow{2}{*}{conv3}&  
b$(192,3)\times 1$
 & b$(384,4)\times 2$&\multirow{2}{*}{$28\times 28$} \\
& b$(256,3)\times 1$&&\\
\hline
 conv4&
  b$(384,3)\times 2$
 & b$(512,4)\times 2$&$14\times 14$ \\
\hline
 \multirow{2}{*}{conv5}&
  b$(512,3)\times 1$
 & b$(768,4)\times 2$&\multirow{2}{*}{$7\times 7$} \\
 &  b$(768,3)\times 1$ &&\\
\hline
gap&\multicolumn{2}{c|}{global average pooling}&$1\times1$   \\
\hline
fc&\multicolumn{2}{c|}{1000-way softmax} &     \\
\hline
\hline
Param. &$12.79$M&$22.99$M&\\
\hline
FLOP  &$2.8\times10^9$   &$7.8\times10^9$& \\
\hline
Depth &$26$&$34$& \\
\hline
\end{tabular}
\end{center}
\caption{Structure of BitNets used for ILSVRC12 classification task. b$(d,k)\times n$ refers to a stack of $n$ BitBlocks with width $d$ and depth $k$. All convolutional layers employed in each BitBlock use filters of size $3\times 3$. The output size is reduced by applying stride $2$ at the first convolutional layer of first block in some Groups.}
\label{table:bitnets}
\end{table}

\begin{table}
\small
\renewcommand{\arraystretch}{1.2}
\begin{center}
\begin{tabular}{l|c|c|c}
\hline
Model & \makecell{Single\\Crop} & \makecell{Ten\\Crop} &Param\\
 \hline



ResNet-18 B~\cite{resnet}& $30.43$ &$28.22$&  $13.1$M \\
Width $\times2$~\cite{wrn} & $27.06$ &$-$&  $25.9$M\\
Width $\times3$~\cite{wrn} &$25.58$& $-$&$45.6$M\\
\hline
FractalNet-34~\cite{fractalnet}&$-$&$24.12$&$-$\\
\hline
ResNet-34 B~\cite{resnet} & $26.73$&  $24.76$ & $23.2$M\\
Width $\times2$~\cite{wrn} & $24.5$ &$-$&  $48.6$M\\
\hline
BitNet-26&$27.74$&$25.83$&$12.8$M\\
BitNet-34&$25.46$&$23.77$&$23.0$M\\
\hline

\end{tabular}
\end{center}
\caption{Single model, Top-1 classification error (\%) obtained using ILSVRC12 validation dataset.}
\label{table:imgacc}
\end{table}

\subsection{Experimental Analyses of Depth of BitBlock}
Also, we observe that using the same width $d$ and same number of BitBlocks $n$, BitBlocks having different depth may provide different performance (see BitNet $(d=10,k=3,n=2)$ and BitNet $(d=10,k=2,n=2)$ in Table~\ref{table:cifarerr}). Thus, we further analyze how performance of BitNets changes with respect to the depth of BitBlocks. Specifically, we evaluate BitNet $(d=4,k,n=4)$ (a deep-narrow one) and BitNet $(d=12,k,n=2)$ (a relatively shallower-wider one) by setting different values to $k$. The results are given in Table~\ref{table:wrtk}. As illustrated in the table, as $k$ increases, both BitNets gain a performance boost due to an increase on the expressive capacity. Note that for the BitNet $(d=12,k,n=2)$ employed using the Cifar-100, there is almost a $4\%$ performance boost from $k=1$ to $k=2$ with a $33\%$ increase of parameter size. We observe that the performance boosts further as $k$ increases. These observations match our theoretical analyses provided in Section~\ref{theory}, which states that as $d$ and $n$ are fixed, increasing $k$ can increase the expressive capacity of a BitNet with a small increase of parameter size.
However, the boosting trend tends to be saturated after $k\geq 3$, and the increase of parameter size is also neglectable. This can be explained as follows. As $k$ increases, the width of convolutional layer also decreases in the binary tree architecture resulting in a saturated expressive capacity.
Additionally, we also observe that wider BitNets ($d=12$) gain more performance boost than a narrower Bitnet ($d=4$) with the same increase on $k$. This is simply because the increased layers in the wider BitNet are wider than narrower one, thus it can increase more expressive capacity.  

\begin{table}[t]
\renewcommand{\arraystretch}{1.2}
\small
\begin{center}
\begin{tabular}{l|c c|c c}
\hline
   &\multicolumn{2}{c|}{\makecell{BitNet\\$(d=4,k,n=4)$}} &\multicolumn{2}{c}{\makecell{BitNet\\$(d=12,k,n=2$)}}\\
   &Param.&Cifar10/100&Param.&Cifar10/100\\
 \hline
 
$k=1$ &$2.7$M&$4.98/23.54$ & $10.3$M  & $6.02/23.80$ \\
\hline
$k=2$ &  $3.5$M&$\mathbf{4.68}/22.72$ & $13.8$M&$4.02/19.86$  \\
\hline
$k=3$ & $3.7$M&$4.82/\mathbf{22.19}$ & $14.7$M&$\mathbf{3.98}/18.97$ \\
\hline
$k=4$ &$3.7$M&$4.69/22.65$ & $14.9$M&$4.11/19.22$ \\
\hline
$k=5$ &$3.7$M&$4.69/22.86$ & $15.0$M&$4.09/\mathbf{18.95}$ \\
\hline
$k=6$ &$3.7$M&$4.77/22.69$ & $15.0$M&$4.19/19.51$ \\
\hline
\end{tabular}
\end{center}
\caption{Cifar-10/100 classification error (\%) of two BitNets with respect to $k$. Definition of $d$, $n$, and $k$ are given in Table~\ref{table:cifarnet}.}
\label{table:wrtk}
\end{table}

\subsection{Experimental Analyses of Gradient Vanishing Problem}\label{gv}

In this section, we analyze our BitNets considering the gradient vanishing problem. Specifically, during the training of a BitNet or Wide ResNet employed using the Cifar-100, we compute the mean magnitude (L2-norm) of gradient obtained at the first convolutional layer per epoch.
The results are given in Figure~\ref{fig:grads}.
We analyze 9 models having different depth but same width $d=4$.
In the figure, wide resnet-38 refers to the Wide ResNet $(d=4,k=2,n=6)$ having  $38$ layers and plainnet-38 is the  one designed without using residual shortcut connections. Models denoted by b$(d,k,n)-m$ are proposed BitNets and $m$ is the total depth.

\begin{figure}[t]
\begin{center}
   \includegraphics[width=0.9\linewidth]{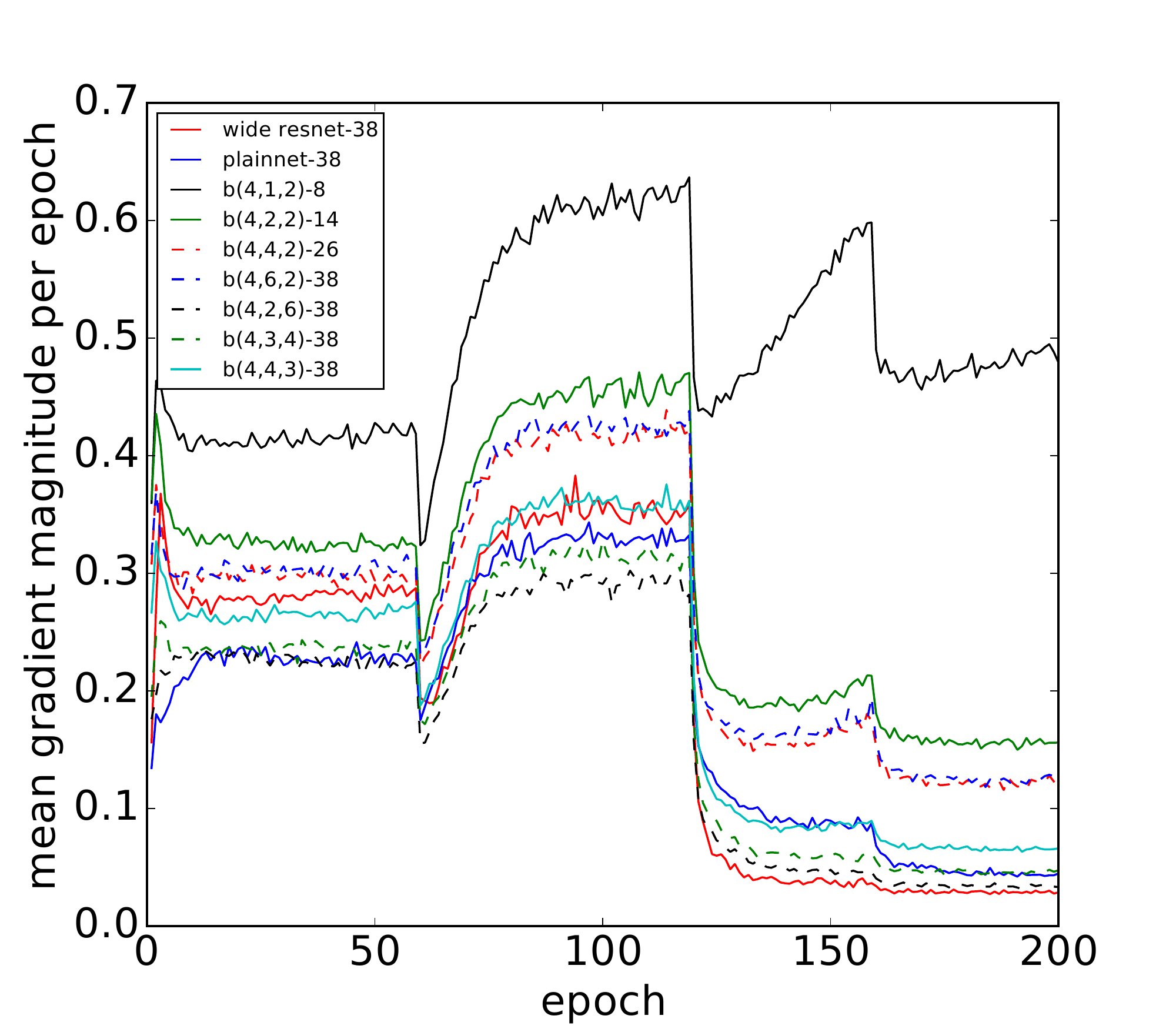}
\end{center}
   \caption{Mean magnitude (L2-norm) of gradient of the first convolutional layer computed  per epoch during the training. b$(d,k,n)-m$ refers to an m-layer BitNet with structure configuration $(d,k,n)$ defined in Table~\ref{table:cifarnet}. Best viewed in color print.}
\label{fig:grads}
\end{figure}

As we can see from the figure, for all models having $38$ layers, our BitNet b$(4,6,2)-38$ shows the strongest magnitude, even stronger than wide resnet-38. This results indicates that using concatenation in the proposed binary tree architecture can ease the gradient vanishing problem. We also observe that gradient becomes weaker as the number of blocks $n$ is increased and the depth $k$ of each BitBlock is reduced. For instance, for all $38$ layers BitNets, the magnitude can be roughly sorted by 
b$(4,2,6)-38$ weaker than b$(4,3,4)-38$ weaker than b$(4,4,3)-38$ weaker than b$(4,6,2)-38$ according to the strength of the magnitude.
This observation reflects that as $n$ increases and $k$ decreases, features obtained at less number of lower layers are concatenated to form the output of each BitBlock. In general, the gradients  propagate more layers to reach lower layers. 

We also analyze how the gradient magnitude changes with respect to BitBlock's depth $k$ if $d$ and $n$ are fixed. As we can see, b$(4,1,2)-8$ shows the strongest magnitude of gradient among all nine models as expected because it is the shallowest model. By increasing $k$, we observe that for BitNet b$(4,2,2)-14$ and b$(4,4,2)-26$, the magnitude is decreased because more layers  are used to propagate gradient in BitBlocks.

The errors obtained for nine models are given in Table~\ref{table:graderror}.
Although b$(4,1,2)-8$ and b$(4,2,2)-14$ show stronger gradient magnitude than wide resnet-38, they provide higher  classification errors. This is mainly because the depth of these two BitNets is too small resulting in insufficient expressive capacity. BitNet b$(4,4,3)-38$ obtains comparable classification performance by using larger depth. The gradient magnitude of BitNet b$(4,4,3)-38$ is also comparable with that of resnet, which benefits from the proposed binary tree architecture. Without using binary tree architecture, the gradient magnitude of plainnet-38 is weaker than that of resnet-38 and the final classification error is larger.

\begin{table}[t]
\renewcommand{\arraystretch}{1.2}
\begin{center}
\begin{tabular}{l|c|c}
\hline
  
 Model  &Param.&Error\\
 \hline
 
wide resnet-38 &$8.9$M&$ 22.89 $ \\
\hline
plainnet-38   &$8.9$M&$ 29.13 $\\
\hline
b$(4,1,2)-8$  &$1.2$M&$ 29.19 $ \\
\hline
b$(4,2,2)-14$ &$1.5$M&$ 25.45 $ \\
\hline
b$(4,4,2)-26$ &$1.7$M&$ 22.72 $ \\
\hline
b$(4,6,2)-38$ &$1.7$M&$ 23.87 $ \\
\hline
b$(4,2,6)-38$ &$5.4$M&$ 23.22 $ \\
\hline
b$(4,3,4)-38$ &$3.7$M&$ 22.19 $ \\
\hline
b$(4,4,3)-38$ &$2.7$M&$ 22.60 $ \\
\hline
\end{tabular}
\end{center}
\caption{Cifar-100 classification error (\%) of the models illustrated in Figure~\ref{fig:grads}.}
\label{table:graderror}
\end{table}




\section{Conclusions}

In this paper, we introduce and analyze a binary tree architecture to truncate architecture of wide networks considering their parameter size and accuracy trade-off.
In the proposed architecture, the width at each layer is continuously reduced from lower layers to higher layers.
Also, features obtained at different layers are concatenated to form the output of our architecture. 
In our experiments, the networks which are designed using the proposed architecture, called BitNets, can obtain better parameter size and accuracy trade-off on several benchmark datasets compared to baseline networks endowed with conventional architectures.
Additionally, in our experimental analyses, we observe that the concatenation structure can ease the gradient vanishing problem.
We also provide a theoretical analyses of the expressive capacity of BitNets. 
In our future work, we plan to use BitNets for object detection tasks.

{\small
\bibliographystyle{ieee}

}

\raggedbottom
\pagebreak

\section*{Supplemental Material}

In the supplemental material, we provide the proofs of Corollary \ref{corr3} and Proposition \ref{prop3} in Section \ref{theory} in the main paper. We also show training errors of proposed BitNets used in our experiments. 

\section*{A.  Proofs}

\begin{proof}[Proof of Corollary \ref{corr3}]

The total depth of the given network is $LK$. Thus, the parameter size is $\mathcal{O}\big(LKD^2\big)$. Also, according to Corollary 5 of~\cite{bengio12}, the maximal number of linear region of functions that can be computed by the given network in an $n$-dimensional ($D>n$) input space is lower bounded by $\mathcal{O}\big((\frac{D}{n})^{n(KL-1)} D^n \big)$, which can be simplified by a looser bound $\mathcal{O}\big((\frac{D}{n})^{nKL} \big)$ and resulting in our corollary.

\end{proof}

\begin{proof}[Proof of Proposition \ref{prop3}]
According to the definition of BitBlock, the width of $k^{th}$ layer in a BitBlock is $\frac{D}{2^{(k-1)}}$. Thus, the parameter size of the given BitBlock can be computed by,
\begin{equation*}
\begin{split}
 &\sum \limits_{k=1}^{K} \Big ( \frac{D}{2^{(k-1)}} \Big )^2 \\
 =&\frac{4}{3}\big(1-\frac{1}{4^K} \big)D^2
 \end{split}.
\label{eq:sp}
\end{equation*}
Thus, the parameter size of $L$ stacked BitBlocks is $\mathcal{O}\Big( \frac{4}{3}L\big(1-\frac{1}{4^K}\big)D^2\Big)$.
According to Theorem 4 of~\cite{bengio12}, the maximal number of linear regions of functions that can be computed by a given BitBlock in an $n$-dimensional ($D\geq 2^Kn$) input space is lower bounded by
\begin{equation*}
\begin{split}
 &\prod \limits_{k=1}^{K} \Big ( \frac{D}{2^{k}n} \Big )^n \\
 =& \Big ( \frac{D}{n} \Big )^{nK} 2^{\frac{-n(1+K)K}{2}}\\
 >& \Big ( \frac{D}{n} \Big )^{nK} 2^{-nKK} \\
 =&\Big ( \frac{D}{2^Kn} \Big )^{nK} .
 \end{split}
\label{eq:sp2}
\end{equation*}

\end{proof}
As a result, with $L$ BitBlocks, the maximal number of linear region is bounded by $\mathcal{O}\Big( \big(\frac{D}{2^Kn}\big)^{nKL} \Big)$.

\section*{B. Training Errors}
 Training and testing errors of BitNets used in our experiments for Cifar-100 classification task are given in Table~\ref{table:trainerr}. 
 As shown in~\cite{bengio1,bengio12}, the expressive capacity reflects the complexity of class decision boundary computable by a DNN. We use training errors to quantify the expressive capacity of a DNN.
 As we can see from the table, the training errors of BitNets are close to that of the Wide ResNets, which indicates that by using considerably less number of parameters, the proposed BitNets can obtain expressive capacity similar to that of the Wide ResNets.


\begin{table}
\renewcommand{\arraystretch}{1.2}
\begin{center}
\begin{tabular}{l|c|c|c}
\hline
  
 Model  &Param.&Test Err.&Train Err.\\
 \hline

\hline
Wide ResNet (d=4,k=2,n=6) &$8.9$M&$ 22.89 $&$0.018$\\
BitNet (d=4,k=3,n=4)  &$3.7$M&$ 22.19 $&$0.018 $ \\
BitNet (d=4,k=4,n=3)  &$2.7$M&$ 22.60 $&$0.022 $ \\
BitNet (d=4,k=2,n=6)  &$5.4$M&$ 23.22 $&$0.026 $ \\
BitNet (d=4,k=6,n=2)   &$1.7$M&$ 23.87 $&$ 0.028 $ \\
\hline
Wide ResNet (d=10,k=2,n=2) &$17.1$M&$ 21.59 $&$0.018$\\
BitNet (d=10,k=2,n=2)   &$9.6$M&$ 20.48 $&$0.018$ \\
BitNet (d=10,k=4,n=1)  &$3.9$M&$ 23.88 $&$0.020$ \\
\hline
Wide ResNet (d=10,k=2,n=3) &$26.8$M&$ 20.75 $&$0.018$\\
BitNet (d=10,k=2,n=3)  &$15.6$M&$ 19.29 $&$0.018$ \\
BitNet (d=10,k=3,n=2)  &$10.2$M&$ 19.37 $&$0.018$ \\
\hline
Wide ResNet (d=12,k=2,n=4) &$52.5$M&$ 20.43 $&$0.018$\\
BitNet (d=12,k=2,n=4)  &$31.2$M&$ 19.06 $&$0.018$ \\
BitNet (d=12,k=4,n=2)  &$14.9$M&$ 19.22 $&$0.018$ \\

\hline
\end{tabular}
\end{center}
\caption{Cifar-100 classification error (\%) of the BitNets and Wide ResNets used in our experiments.}
\label{table:trainerr}
\end{table}

\end{document}